\pdfoutput=1
\documentclass[conference]{IEEEtran}
\usepackage{times}

\usepackage[numbers,sort&compress]{natbib}
\usepackage{multicol}
\usepackage[bookmarks=true]{hyperref}
\usepackage{amssymb, amsmath, amsthm}
\usepackage{algorithm,algorithmicx,algpseudocode}
\usepackage{comment}
\usepackage{caption}
\usepackage{subcaption}
\usepackage{graphicx}
\usepackage{multicol}
\usepackage{float}
\usepackage{siunitx}
\usepackage[normalem]{ulem}
\graphicspath{{figures/}}

\usepackage[dvipsnames]{xcolor}




\usepackage{amsthm}  
\usepackage{amsmath}
\usepackage{bm}

\newtheorem{theorem}{Theorem}
\newtheorem{proposition}{Proposition}

\newtheorem{definition}{Definition}
\newtheorem{assumption}{Assumption}

\usepackage{ifthen}
\newboolean{include-notes}
\setboolean{include-notes}{false}  

\usepackage{xcolor}
\newcommand{\jfnote}[1]{\ifthenelse{\boolean{include-notes}}{\textcolor{orange}{\textbf{Jaime:} #1}}{}}
\newcommand{\zznote}[1]{\ifthenelse{\boolean{include-notes}}{\textcolor{teal}{\textbf{Zixu:} #1}}{}}
\newcommand{\remove}[1]{\ifthenelse{\boolean{include-notes}}{\textcolor{red}{\sout{#1}}}{}}


\usepackage{lipsum}





\newcommand{\reals}{\mathbb{R}}


\newcommand{\traj}{{\mathbf{x}}}
\newcommand{\csig}{{\mathbf{u}}}

\newcommand{\xset}{{\mathcal{C}}}  
\newcommand{\cset}{{\mathcal{U}}}

\newcommand{\wspace}{{\mathcal{W}}}  

\newcommand{\dyn}{{f}}




\newcommand{\policy}{{\pi}}

\newcommand{\cstrat}{{\bm{\upsilon}}}





\newcommand{\safeset}{{\Omega}}

\newcommand{\failure}{{\mathcal{F}}}

\newcommand{\reachavoid}{{\mathcal{RA}}}

\newcommand{\fras}{{\overrightarrow{\reachavoid}}}

\newcommand{\hidden}{{\mathcal{H}}}
\newcommand{\obj}{{\tilde{o}}}
\newcommand{\obst}{{\mathcal{O}}}
\newcommand{\capture}{{\text{Cap}}}
\newcommand{\capturebasin}{{\mathcal{B}}}
\newcommand{\fhs}{{\overrightarrow{\hidden}}}











\newcommand{\danger}{{\mathcal{D}}} 
\newcommand{\footprint}{{\phi}}    

\newcommand{\fov}{{\mathcal{Z}}} 



\pdfinfo{
   /Author (Zixu Zhang; Jaime F. Fisac)
   /Title  (Safe Occlusion-aware Autonomous Driving via Game-Theoretic Active Perception)
   /CreationDate (D:202101290000)
   /Subject (Robots)
   /Keywords (Robots;Occlusion;Planning;Autonomous Vehicles;Safety)
}

\begin{document}

\title{Safe Occlusion-aware Autonomous Driving via Game-Theoretic Active Perception}



\author{\IEEEauthorblockN{Zixu Zhang and Jaime F. Fisac}
	\IEEEauthorblockA{Department of Electrical and Computer Engineering\\
		Princeton University, Princeton, NJ 08544\\
		Email: zixuz@princeton.edu;~jfisac@princeton.edu}}
	
\maketitle

\begin{abstract}
Autonomous vehicles interacting with other traffic participants heavily rely on the perception and prediction of other agents' behaviors to plan safe trajectories. However, as occlusions limit the vehicle's perception ability, reasoning about potential hazards beyond the field of view is one of the most challenging issues in developing autonomous driving systems. This paper introduces a novel analytical approach that poses safe trajectory planning under occlusions as a hybrid zero-sum dynamic game between the autonomous vehicle (evader) and an initially hidden traffic participant (pursuer). Due to occlusions, the pursuer's state is initially unknown to the evader and may later be discovered by the vehicle's sensors. The analysis yields optimal strategies for both players as well as the set of initial conditions from which the autonomous vehicle is guaranteed to avoid collisions. We leverage this theoretical result to develop a novel trajectory planning framework for autonomous driving that provides worst-case safety guarantees while minimizing conservativeness by accounting for the vehicle's ability to actively avoid other road users as soon as they are detected in future observations. Our framework is agnostic to the driving environment and suitable for various motion planners. We demonstrate our algorithm on challenging urban and highway driving scenarios using the open-source CARLA simulator.

\end{abstract}

\IEEEpeerreviewmaketitle

\section{Introduction}
 
Line-of-sight sensors, such as cameras, radars, and LiDARs, are widely used on autonomous vehicles and other mobile robots to perceive surroundings and make real-time decisions. The robustness of trajectory planning in dynamic environments,
such as highway driving (Fig.~\ref{fig: pitch_figure}), has primarily depended on predicting and analyzing other agents' future actions (we refer the reader to recent thorough surveys~\cite{Paden2016_av_motion_survey,Rudenko2020HumanPrediction} on these topics). In everyday driving scenarios, occlusions are ubiquitous and may be encountered in various forms (common types are shown in Fig. \ref{fig: occlusion}). They limit the sensors' perception abilities and introduce uncertainty in motion predictions. For example, some occlusions are due to the road's geometric nature, where landscapes and buildings block the view of potential road hazards ahead. Human drivers usually slow down in these situations, to earn more reaction time and allow an eventual stop if needed. Other occlusion scenarios also involve dynamic agents, where more complicated interactions than solely slowing down are necessary to maintain safety and achieve trajectory efficiency.


\begin{figure}[t!]
    \centering
    \begin{subfigure}[b]{0.49\textwidth}
        \centering
        \includegraphics[width=3.5in]{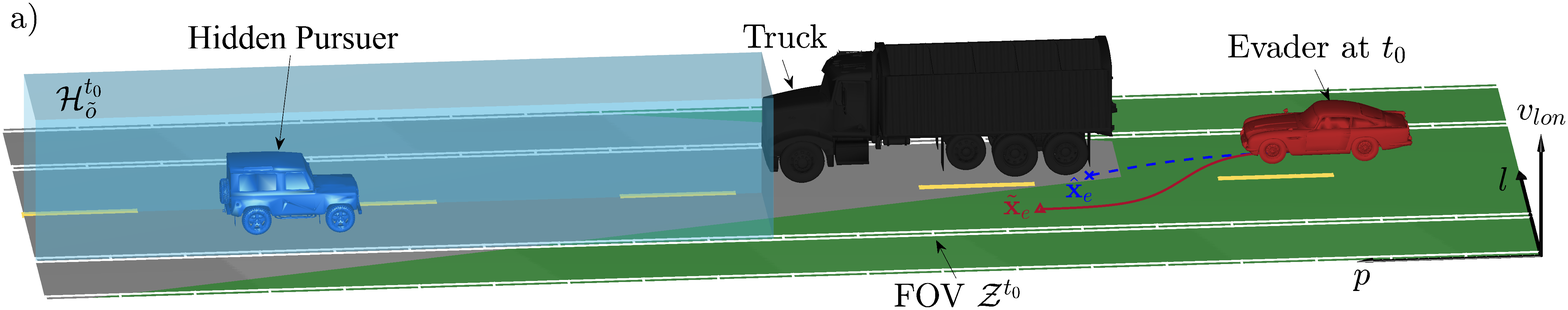}
    \end{subfigure}
    \begin{subfigure}[b]{0.49\textwidth}
        \centering
        \includegraphics[width=3.5in]{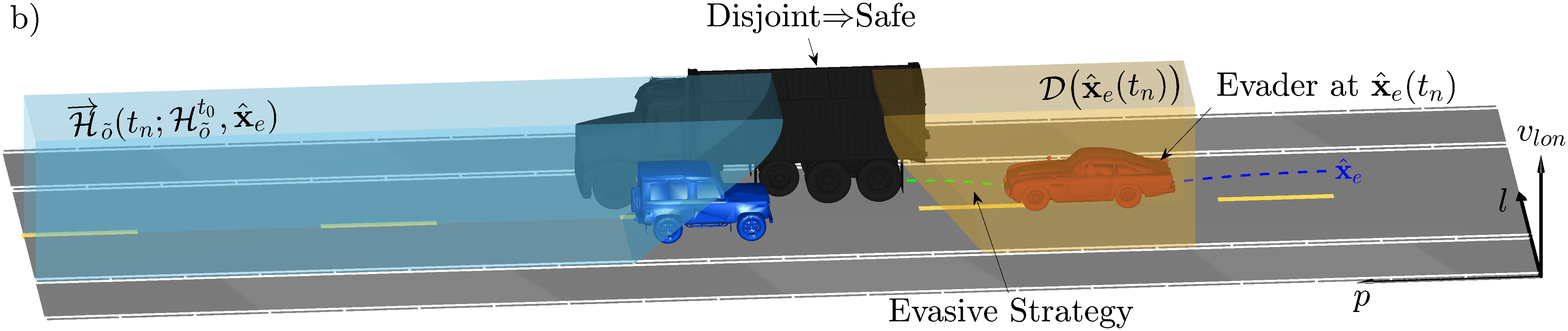}
    \end{subfigure}
    \begin{subfigure}[b]{0.49\textwidth}
        \centering
        \includegraphics[width=3.5in]{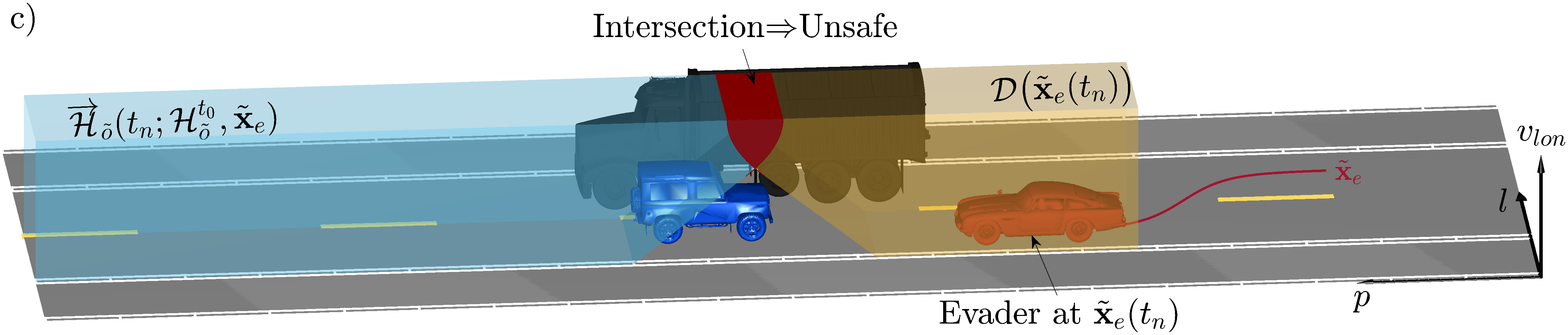}
    \end{subfigure}
\captionsetup{belowskip=-4pt}
\caption{The autonomous vehicle (evader) verifies the safety of two potential trajectories $\tilde{\traj}_e$ and $\hat{\traj}_e$ against collisions with a hidden agent (pursuer) outside the current field of view $\fov^{t_0}$.
Our framework utilizes a game-theoretic approach to determine the existence of a closed-loop evasive strategy if an agent is observed at $t_n$, by checking the intersection between the danger zone $\danger(\traj_e(t_n))$ of the proposed waypoint and the forward hidden set $\vec{\hidden}(t_n;\hidden^{t_0}_\obj, \traj_e)$. Using this approach, we can determine that the trajectory $\hat{\traj}_e$ shown in b) is safe, while $\tilde{\traj}_e$ shown in c) is not safe due to the intersection between two sets.  Supplementary materials can be found in \texttt{\url{https://saferobotics.princeton.edu/research/occlusion_hybrid_game}}~.}
\label{fig: pitch_figure}
\end{figure}

For autonomous vehicles to operate safely, risks outside their field of view must be evaluated while planning for future actions. Previous works \cite{koschi2020set, Orzechowski2018OcclusionSet, Yu2019risk, yu2019occlusion} utilize forward reachability analysis to over-approximate potential occluded objects' future occupancy based on the current sensor observations and generate collision-free trajectories by avoiding the entire forward reachable set (FRS) over the planning horizon. However, this approach ignores the autonomous agent's future ability to respond to objects, which are currently occluded but may later become visible through new sensor observations. Furthermore, especially when considering potential dynamic agents outside of the field of view, their FRS grows rapidly as the planning time horizon increases. Thus, while FRS-based method provides a sufficient condition to achieve safe behaviors in the presence of occlusions, this may cause the ego agent to choose suboptimal and unnecessarily conservative actions, which can negatively impact driving performance, as well as trust and perceived safety by the public~\cite{kaur2018trust}.

\begin{figure*}[ht]
	\centering
	\begin{subfigure}[b]{0.3\textwidth}
		\centering
		\includegraphics[height=1.5in]{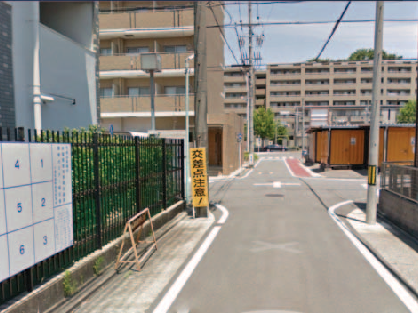}
		\caption{A blind intersection}
	\end{subfigure}	
\hfill
	\begin{subfigure}[b]{0.3\textwidth}
		\centering
		\includegraphics[height=1.5in]{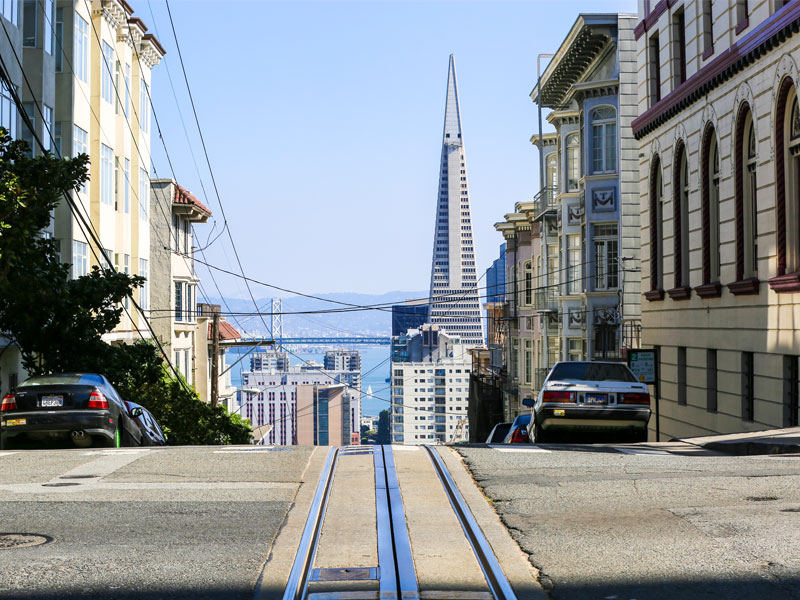}
		\caption{A blind summit in San Fransisco}
	\end{subfigure}	
\hfill 
	\begin{subfigure}[b]{0.3\textwidth}
		\centering
		\includegraphics[height=1.5in]{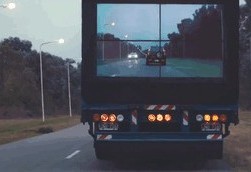}
		\caption{Overtaking a truck}
		\label{fig: overtake_truck}
	\end{subfigure}	
\captionsetup{belowskip = -6pt}
\caption{Common occlusion scenarios that create challenges and safety risks for autonomous vehicles}
\label{fig: occlusion}		
\end{figure*}

A reasonable strategy to overtake a slow-moving truck on an undivided highway, as shown in Fig. \ref{fig: pitch_figure} and \ref{fig: overtake_truck}, is first to gain the visibility of the incoming traffic by slowing down and cautiously moving towards the oncoming lane. During this process, the ego vehicle must reason about whether it can safely move back to the original lane if any objects show up from the occluded regions. While slowly edging sideways does not maximize the instantaneous progress, it helps to gather additional information regarding the occluded regions, opening up the possibility of overtaking, and potentially making more progress over a longer time horizon. Inspired by the above observations, we set out to formally analyze the decision-making problem faced by an autonomous system navigating a partially occluded environment, considering the worst-case existence and behavior of potential unobserved objects defined within the Operational Design Domain (ODD) \cite[Ch.~2]{thorn2018_ODD} and, critically, accounting for the ability to detect and actively avoid them once they are observed.

For each potential object, we find that this decision-making problem takes the form of a two-mode zero-sum hybrid dynamic game between an evader, the autonomous vehicle,  with limited information, and an initially hidden pursuer with perfect state information.
The terminal mode of the game is a closed-loop feedback game in which both players are able to observe each other's state at each time instant and adapt their actions strategically. The first mode of the game, in which the pursuer is invisible to the evader, can be posed as an open-loop ``trajectory game'', in which the pursuer may choose its worst-case trajectory in response to \emph{each} possible trajectory chosen by the evader. While the evader is at a strong informational disadvantage in the first mode of the game, it can plan strategically to force a transition to the second mode by having the invisible pursuer fall into its field of view. The pursuer can only win the overall game (force the evader to collide or violate constraints) if it enters the \emph{capture basin} of the second mode (from which the evader cannot escape capture) without first being detected.

    Based on this theoretical analysis we propose a novel occlusion-aware trajectory planning framework for autonomous vehicles that leverages this two-mode dynamic game structure.
The closed-loop planner solves a reach-avoid game \cite{bacsar1998dynamic, HO1965_pursuit_evasion,isaacs1999differential}, computing an evasive feedback policy towards an invariant safe set while avoiding collision with both observed and hidden objects. The open-loop game solution is implemented by a receding horizon planner, which generates optimal trajectories according to some cost function (e.g., progress, comfort, and efficiency), based on (possibly conservative) predictions of objects currently inside the vehicle's field of view and under the assumption (for optimization purposes) that no hidden objects will appear.\footnote{%
        This assumption is made for simplicity of the performance optimization and may be replaced by an alternative criterion, e.g. a probabilistic model of hidden object existence. We note that this only affects performance, and has no bearing on safety guarantees, which are our focus in this work.}
To provide a safety guarantee, we introduce a constraint on the future field of view, obtained from the game-theoretic analysis, to ensure that the closed-loop planner will find a collision-free evasive strategy under the worst-case scenario from all waypoints along the planned trajectory.

    The remainder of the paper is structured as follows:
    In Section \ref{sec:related_work}, we place our contribution in the context of related efforts in the existing literature.
    In Section \ref{sec:formulation}, we introduce some important theoretical building blocks.
    In Section \ref{sec:game}, we formalize the problem of safe planning under occlusions as a two-mode hybrid dynamic game, and characterize its solution in terms of optimal strategies and winning conditions.
    In Section \ref{sec:framework}, we introduce the algorithmic trajectory planning framework based on the game-theoretic analysis.
    In Section \ref{sec:results}, we evaluate our proposed framework in a diverse set of representative driving scenarios using the open-source CARLA simulator~\cite{Dosovitskiy17_Carla} and demonstrate the benefits of our hybrid open/closed-loop game-theoretic solution with respect to recently proposed, more conservative methods.
    Finally, in Section \ref{sec:conclusion}, we provide a closing discussion and briefly consider promising directions for future work.

\section{Related Work}\label{sec:related_work}
Perception-aware motion planning has been studied for several decades. In particular, the risk of potential objects beyond the sensor's field of view has received attentions from the robotics and autonomous vehicle communities recently. Early works \cite{chung2009_safe, bouraine2012provably} plan trajectories for mobile robots in the free space represented by a shrunk field of view considering objects that may emerge from occlusions. To formally analyze the collision avoidance behavior with occluded objects, \citet{bouraine2014_passPMP} propose the braking Inevitable Collision States to achieve a passive-Safety guarantee~\cite{macek2009passive_safe}. \citet{Nager2019computational} employs the above ideas in the autonomous driving scenarios by predicting different traffic participants' future occupancy in the current field of view and avoid their braking Inevitable Collision States. However, the passive safety is a weak guarantee since other traffic participants may inevitably collide with the stopped autonomous vehicle. 

Recent works \cite{koschi2020set,Orzechowski2018OcclusionSet, Yu2019risk, yu2019occlusion, lee2017collision} tackle the risk assessment problem under occlusion by introducing ``phantom" objects beyond the field of view and predicting their future states through the forward reachability analysis. Yu et al.~\cite{Yu2019risk, yu2019occlusion} apply a sampling method to represent potentially occluded objects' states as particles. Bidirectional reachability analysis is used to determine the distribution of dangerous ``phantom" objects. While their methods efficiently represent the distribution of hidden obstacles, they do not provide guarantees to avoid collisions. 

To achieve safety guarantee, \citet{Orzechowski2018OcclusionSet} over-approximate the hidden objects' forward reachable set (FRS) by assuming an object may emerging from occlusion with the maximum possible velocity at any moment. A trajectory is then planned to avoid the intersection with the FRS. To improve FRS approximation, \citet{neel2020improving} use sequential observations to update the velocity bound that an object can emerge from the occlusion. \citet{koschi2020set} also propose a set-based approach to approximate the FRS of occluded objects by considering road structure and traffic rules. While the added structure helps reducing conservativeness, these approaches still ignore the autonomous vehicle's future ability to respond to the other road participants' presence and actions once new sensor observations are obtained. Therefore, they require the trajectory to avoid the computed FRS based on current information as a sufficient condition for safety, and the resulting behaviors are often overly conservative. 

Considering autonomous vehicle's future ability of sensing and reaction, previous works \cite{brechtel2014probabilistic, bouton2018scalable, Hubmann2019POMDP} formulate the occlusion-aware trajectory planning as a Partially Observable Markov Decision Process (POMDP) and carefully design the observation model in the belief state to take occlusion into account. While POMDP solutions can generate less conservative strategies for autonomous driving under occlusion, safety guarantees are not provided, and a reduction in the state-space is required to make the algorithm computationally tractable. 

The idea of active perception \cite{bajcsy1988active, bajcsy2018revisiting, Unterholzner2012_active_perception} attempts to close the loop between robot perception and motion planning by generating trajectories that will explore the environment and gain information to reduce uncertainty and improve future planning. Some recent works \cite{andersen2017max_visibility, wang2020generating} also tackle the occlusion-aware planning problem through active perception by incorporating the prediction of the autonomous vehicle's future field-of-view into motion planning. \citet{andersen2017max_visibility} solve the truck-overtaking problem through an MPC trajectory planner guided by a finite state-machine to determine overtaking behavior and maximize visibility. \citet{wang2020generating} propose a visibility risk metric based on predicted field-of-view to penalize the trajectories with low visibility and high velocity when approaching an occluded road section. While their results show improved the trajectory efficiency through exploration behaviors that increase visibility, their safety guarantees still rely on avoiding the over entire FRS of hidden objects.

\section{Preliminaries}\label{sec:formulation}
\subsection{System Dynamics and Reachable Sets}

We consider an ego agent $e$ and other dynamic objects $o_1, o_2, \cdots$ in a three-dimensional environment $\wspace\subseteq\mathbb{R}^3$.
The evolution of each agent's state $x_i^t\in\mathbb{R}^{n_i}$ is governed by continuous-time dynamics
\begin{equation}
    \dot{x}^{t}_i = \dyn_i(x^t_i,u^t_i), \quad i\in\{e, o_1, o_2, \hdots\},
    \label{eq:dynamics}
\end{equation}
where $u^t_i\in\cset_i\subseteq\mathbb{R}^{m_i}$ is the agent's control input. The agent's allowable workspace $\wspace$ and input set $\cset_i$ are defined by the Operational Design Domain.

The input sets $\cset_i$ are assumed compact, and the dynamics $\dyn_i:\mathbb{R}^{n_i} \times \cset_i \to \mathbb{R}^{n_i}$ are assumed bounded, continuous in $u_i$ and Lipschitz-continuous in $x_i$.
Under these assumptions, for any initial agent configuration $x_i^0\in\mathbb{R}^{n_i}$ and measurable control signal ${\csig_i:[0, \infty)\rightarrow \cset_i}$, there exists a unique, continuous state trajectory $\traj_i(\cdot;0,x_i^0,\csig_i)$ satisfying (cf.~\cite[Ch.~2]{coddington1955theory}):
\begin{subequations}\label{eq:traj}
\begin{align}
\traj_i(0;0,x^0_i, \csig_i) =\,& x_i^0 \label{eq:traj_ic}\\
\dot{\traj}_i(t;0,x^0_i, \csig_i) =\,& f_i(\traj_i(t;0,x^0_i, \csig_i), \csig_i(t)),\, \text{a.e. } t\ge 0.
    \label{eq:traj_dyn}
\end{align}
\end{subequations}

To formally analyze the safe interaction between agents, we will often need to reason about \emph{constrained} reachable sets, in which we require that agents avoid entering certain (possibly time-varying) forbidden regions of their state space, which may be encoded through a set-valued map $\failure_i:[0,\infty)\rightrightarrows\mathbb{R}^{n_i}$,
with $\failure_i(\tau)\subseteq\mathbb{R}^{n_i}$ for each~$\tau$.
\begin{definition}[Forward reach-avoid set]\label{def:FRAS}
    An agent $i$'s \textit{forward reach-avoid set} at time $t\ge t_0$ from an initial set $\xset_i^{t_0}$ under failure set $\failure_i(\cdot)$ is the set of all states that agent $i$ can reach at time $t$ starting from some state $x_i^{t_0}\in\xset_i^{t_0}$ at time $t_0$
    while avoiding the set $\failure_i(\tau)$ at all intermediate times~$\tau$:
	\begin{align*}
	    \fras_i(t;t_0,\xset^{t_0}_i\!\!,\failure_i) \!=\! \{&x^t_i \!\mid\! \exists x_i^{t_0}\!\in\xset_i^{t_0}, \exists \csig_i,
	    \traj(t; t_0, x_i^{t_0}, \csig_i)\!=\!x^t_i \\
	    &\wedge \traj(\tau; t_0, x_i^{t_0}, \csig_i)\not\in\failure_i(\tau)\, \forall \tau\in[t_0,t] \}
	\end{align*}
\end{definition}

\subsection{Observations and Field of View}
    
The set-valued \emph{footprint} $\footprint_i:\mathbb{R}^{n_i}\rightrightarrows\wspace$ maps agent $i$'s state $x_i$ to its occupied workspace $\footprint_i(x_i)\subseteq\wspace$.
We will also denote by $\footprint_i^{-1}$ the inverse image induced by the footprint map: $\footprint_i^{-1}(M):=\{x_i\mid \footprint_i(x_i)\cap M\neq\emptyset\}$.
The occupied workspace by all non-ego objects at time $t$ is then given by
\begin{equation}
   \obst(t):=\bigcup_{i}\footprint_i(s^t_i)\subseteq\wspace, \quad i\in\{o_1, o_2, \cdots\}
   \label{eq:occupied_workspace}
\end{equation}
For compactness, we will often write this set as $\obst^t$.

We assume that the ego agent is equipped with a sensor (or suite of sensors) that enables it to accurately determine the presence or absence of other agents within certain well-observed regions of the workspace.
For the purposes of our analysis, we refer to this overall region as the ego agent's \textit{field of view} (FOV), denoted $\fov(x_e^t,\obst^t)\subseteq\wspace$.
The variation of the FOV over time as a function of the ego agent's configuration $x_e^t$ and the footprints of dynamic objects in the environment $\obst^t$ will be central to our analysis.

For example, an autonomous vehicle's perception system, in the context of this analysis, may be abstractly modeled through a line-of-sight sensor with range $r$, centered around the position $p_e^t(x_e^t)$ of the vehicle. This sensor's field of view $\fov(x_e^t,\obst^t)$ is then a set of points $y'\in\wspace$, such that $||p_e^t-y'||\leq r$ and the line segment connecting $p^t$ and $y'$ does not intersect with $\mathcal{O}^t$ (excepting end points).

Moreover, at the moment when the ego agent starts operating, it estimates a set of hypothetical hidden agents' states:
\begin{equation}\label{eq:hidden_set}
    \hidden^0_\obj :=
    \footprint_\obj^{-1}\big(\fov(x_e^0,\obst^0)\cup \obst^0\big)^c
\end{equation}
In the later section, we will discuss how the hidden set $\hidden^0_\obj$ is updated through observations and reachability analysis.

\section{The Hidden Pursuer Hybrid Game}\label{sec:game}
The theoretical contribution in this work is characterizing the least-restrictive guarantees for safe trajectory planning under nondeterministic (worst-case) uncertainty and dynamic environment occlusions through the solution to a hybrid differential game combining open-loop and closed-loop information patterns.

We focus on the problem of strategically detecting and avoiding a single (potentially) adversarial hidden object, in the midst of an arbitrary number of static and dynamic obstacles with bounded motion predictions, and note that the theoretical analysis may be extended to multiple simultaneous hidden adversaries, at the cost of increased complexity.
\begin{assumption}[Separation of hidden adversaries.]\label{assu:decouple}
    There exists at most one unobserved pursuer for the duration of the differential game.
\end{assumption}

We clarify that this assumption does not render our results inapplicable to scenarios with multiple hidden objects, which are common in driving settings.
Rather, Assumption~\ref{assu:decouple} restricts the ego vehicle to \emph{strategically} plan to detect and avoid one hidden pursuer at any given planning cycle.
Any additional potentially-existing objects can be represented as dynamic obstacles whose entire forward-reachable sets must be avoided, following the current state-of-the-art approach.
Our framework therefore enables reducing conservativeness around strategically important occlusions,
while maintaining safety guarantees with respect to any number of objects.

The problem at hand is then a two-player zero-sum differential game of pursuit~\cite{isaacs1951pursuit,isaacs1954differential}, in which the pursuer~$\obj$ seeks to capture (result in collision with) the evader~$e$, who in turn wishes to avoid capture.
The hidden pursuer's initial state $x_\obj^0$ may be anywhere within an initial existence set $\hidden_\obj^0$.
The evader and pursuer are each required to avoid a time-varying obstacle set $\obst_e:[0,T]\rightrightarrows\wspace$, $\obst_\obj:[0,T]\rightrightarrows\wspace$, which may be different\footnote{For example, we may require our ego vehicle, but not the hidden pursuer, to avoid other potentially existing objects.}.
The outcome of the game is binary, with the pursuer winning if it achieves capture, by driving the joint state $(x_e, x_\obj)$ into a capture set $\capture\subset\reals^{n_e}\times\reals^{n_\obj}$, or if the evader violates its obstacle-induced constraint;
the evader wins the game if the pursuer violates its obstacle constraint or if capture does not take place within the horizon $T$ of the game (in practice we will often consider the infinite-horizon case $T\to\infty$).

While the general solution of this game when both players have perfect state information is well studied (cf.~\cite{fisac2015pursuitevasiondefense,Fisac2015ReachAvoidGame}), our problem of interest introduces an important new consideration: the information structure of the game depends on the state history. In particular:
\begin{itemize}
    \item The pursuer $\obj$ has perfect state feedback information, i.e. can observe the full state $(x_e,x_\obj)$ at each time.
    \item The evader $e$ can only observe its own state $x_e^t$ \emph{until} the pursuer enters its field of view, i.e. $\footprint_\obj(x_\obj^t)\in\fov(x_e^t,\obst^t)$; after such an event, the evader also has perfect state feedback information.
\end{itemize}

Because of this history-dependent information structure, it is not possible to formulate this problem as either a closed-loop feedback game or an open-loop trajectory game~\cite[Ch.~8]{bacsar1998dynamic}.
However, we can instead decompose its solution through a two-mode hybrid differential game~\cite{tomlin2000game} with an initial open-loop mode and a terminal closed-loop mode.
As a result of the dynamic programming principle, the solution to the initial game depends on that of the terminal game; we will therefore consider them in backward order.

\subsection{Closed-Loop Game: Perfect-Information Pursuit-Evasion}
\label{sec: closed-loop}
In the terminal game, the pursuer has been detected by the evader, who can now adapt its evasive strategy to the observed pursuer motion over time.
This is a well-studied problem, corresponding to the classical \emph{games of pursuit} first studied by Isaacs in the 1950s~\cite{isaacs1951pursuit,isaacs1954differential}.
The general setting in which players are required to avoid dynamic obstacles was studied in~\cite{Fisac2015ReachAvoidGame,fisac2015pursuitevasiondefense}.

The winning set for the pursuer, or \emph{capture basin}, at time~$t$ is the set of joint conditions from which, regardless of the evader's strategy, either the pursuer can enforce capture or the evader otherwise collides, without the pursuer violating its obstacle constraints. Formally:
\begin{align} \label{eq:capture_basin}
    \capturebasin(t) = \Big\{&(x^t_e, x^t_\obj)\mid  \exists \cstrat_\obj,~\forall \csig_e, \exists\tau\geq t \notag\\
    &\Big(
    \footprint_e\big(\traj_{e}(\tau;t,x^t_e,\csig_e)\big)\!\cap \footprint_\obj\big(\traj_\obj(\tau;t,x_\obj^t, \cstrat_\obj[\csig_e])\big) \!\neq\! \emptyset
    \notag\\
    &\quad\vee \footprint_e\big(\traj_e(\tau;t,x_e^t, \csig_e)\big)\cap\obst_e(\tau)\neq\emptyset\Big) \\
    &\wedge \forall s\in[t,\tau], \footprint_\obj\big(\traj_\obj(s;t,x_\obj^t, \cstrat_\obj[\csig_e])\big)\cap\obst_\obj(s)\!=\!\emptyset\notag
    \Big\}
\end{align}
where $\cstrat_\obj$ is a non-anticipative strategy for the pursuer~\cite{evans1984differential};
this is equivalent to both players performing state feedback\footnote{The equivalence is due to \emph{Isaacs' condition}; we direct interested readers to~\cite[Ch.~8]{bacsar1998dynamic}.}.

The capture basin $\capturebasin: [0,T] \rightrightarrows \reals^{n_e} \times \reals^{n_\obj}$ and players' feedback strategies $\policy_e^t, \policy_\obj^t$ are encoded by the solution to a Hamilton-Jacobi-Isaacs variational inequality~\cite{fisac2015pursuitevasiondefense}. In practice, the numerical solution (which provides a necessary and sufficient condition for safety) may be computationally prohibitive for real-time planning in changing environments;
however, as we will see in Section~\ref{sec:framework}, conservative approximations can be efficiently computed, yielding a sufficient safety condition for the evader.

A critical observation is that in this closed-loop feedback game the evader need not have any single available trajectory that will avoid \emph{all} possible motions of the pursuer (i.e. its forward-reachable set) in order to avoid capture; instead, it can \emph{adapt} its course of action to the unfolding pursuer trajectory, since different pursuer futures are mutually exclusive.
This is a key distinction between our analysis and existing methods based on avoidance of the entire forward-reachable set.

\subsection{Open-Loop Game: Preventing an Invisible Pursuer from Becoming Unavoidable}
Having established the key properties of the closed-loop, perfect information game solution, we can now reason about players' optimal strategies for the previous, open-loop stage.
Indeed, the winning conditions of this first game are directly related to the solution of the second.

In order to make the autonomous vehicle lose the overall game, the pursuer needs to drive the joint state into the capture basin of the closed-loop game without being previously detected: at this point even if the evader becomes aware of its state, the pursuer can enforce capture.
Conversely, if the evader can detect the pursuer before the state enters the capture basin, it can immediately apply its full-state feedback policy $\policy_e^t$ to avoid capture.

In this first stage, however, the evader has no knowledge of the pursuer's whereabouts, nor can it obtain any information about the pursuer until the pursuer's footprint enters its field of view.
As a result, the evader needs to blindly commit to an open-loop trajectory, which (in the worst case) will be met with an optimally adversarial initialization and execution of the pursuer trajectory.
We have the following results (proven in Appendix~\ref{app: proofs}):
\begin{proposition}[Hybrid game solution]\label{prop:hybrid_game}
    Let the closed-loop capture basin $\capturebasin: [0,T] \rightrightarrows \reals^{n_e} \times \reals^{n_\obj}$ be as given in~\eqref{eq:capture_basin}.
    Then, the evader starting from state $x_e^0$ can maintain safety and avoid capture by a hidden pursuer starting anywhere in $\hidden_\obj^0$ if and only if there exists an open-loop control signal $\csig_e$ for the evader such that $\forall t\ge 0$, $\footprint_e\big(\traj_e(t;0,x_e^0,\csig_e)\big) \cap \obst_e(t)=\emptyset$ and, additionally, for all initial pursuer states $ x_\obj^0\in\hidden_\obj^0$ and control signals $\csig_\obj$:
    \begin{align}\label{eq:hybrid_game}
        &\Big[\exists t\ge 0,\,
        \big(\traj_e(t;0,x_e^0,\csig_e),\traj_\obj(t;0,x_\obj^0,\csig_\obj)\big) \in \capturebasin(t)\Big] \implies\notag\\
        &\Big[\exists \tau\in [0,t):
        \footprint_\obj\big(\traj_\obj(t;0,x_\obj^0,\csig_\obj)\big) \in
        \fov\big(\traj_e(t;0,x_e^0,\csig_e),\obst_e(t)\big) \notag\\
        &\qquad\qquad\;\;\vee \footprint_\obj\big(\traj_\obj(t;0,x_\obj^0,\csig_\obj)\big) \in \obst_\obj(t)\Big]
    \end{align}
\end{proposition}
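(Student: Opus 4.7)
The plan is to prove both implications of the biconditional by exhibiting and analyzing a single candidate hybrid evader policy: follow the open-loop control signal $\csig_e$ up to the first detection time $\tau^* := \inf\{t\ge 0 : \footprint_\obj(\traj_\obj(t)) \in \fov(\traj_e(t), \obst_e(t))\}$ (with $\tau^* = \infty$ if detection never occurs), and at $\tau^*$ switch to the closed-loop feedback strategy $\policy_e$ from Section~IV-A. Safety of the hybrid game for the evader starting at $x_e^0$ against every admissible pursuer initialization in $\hidden_\obj^0$ then reduces to showing this single hybrid policy beats every admissible pursuer play if and only if the two conditions in the statement hold.

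For the $\Leftarrow$ direction, I fix $\csig_e$ satisfying the two conditions together with an arbitrary admissible $(x_\obj^0, \csig_\obj)$. Condition~1 rules out evader self-collision during the open-loop phase. I then split on $\tau^*$: (a) if the pursuer never enters the FOV and never violates its own obstacle constraint, the contrapositive of condition~2 forces the joint state to remain outside $\capturebasin(t)$ for all $t$, and since the instantaneous capture set is contained in the capture basin no collision ever occurs; (b) if the pursuer first violates its own obstacle constraint, it loses by the game rules; (c) if the pursuer is detected at a finite $\tau^*$ without prior violation, condition~2 applied at $t=\tau^*$ forces $(\traj_e(\tau^*),\traj_\obj(\tau^*))\notin\capturebasin(\tau^*)$---otherwise detection or violation would be required strictly earlier than $\tau^*$, contradicting its minimality---and the evader can engage $\policy_e^{\tau^*}$ to maintain safety by the defining property of the complement of the capture basin.

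For the $\Rightarrow$ direction, I leverage the information structure: before detection the evader observes nothing about the pursuer, so its control input on $[0,\tau^*)$ is necessarily a measurable function of time alone; let $\csig_e$ be any such pre-detection signal extended measurably past $\tau^*$. Condition~1 must hold, because an evader self-collision along $\csig_e$ at some $t$ against a pursuer that remains occluded up to $t$ would violate safety. For condition~2, I argue by contradiction: if some $(x_\obj^0, \csig_\obj)$ sends the joint trajectory into $\capturebasin(t)$ without prior detection or violation, then by the defining property of $\capturebasin(t)$ the pursuer has a non-anticipative winning strategy from time $t$ onward; concatenating this strategy with $\csig_\obj$ on $[0,t)$ yields an admissible pursuer play that defeats every evader continuation, contradicting safety.

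The main obstacle will be making the information-structure argument in the necessity direction fully rigorous---justifying that the evader's pre-detection control can be identified with a single open-loop signal $\csig_e$ rather than a family of signals indexed by unobserved pursuer events, and carefully handling the boundary case in which detection coincides with the joint state first touching the capture basin, which may require a mild topological assumption on $\capturebasin$ or an open-versus-closed convention in its definition. A secondary technicality is splicing the pursuer's pre- and post-$t$ controls into a single non-anticipative strategy, which should follow from standard concatenation properties as in Evans--Souganidis.
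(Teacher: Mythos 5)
Your proposal is correct and takes essentially the same approach as the paper's proof, which enumerates the same four outcome classes (evader self-collision; capture basin reached while undetected; pursuer obstacle violation; pursuer detected outside the basin, triggering the closed-loop feedback policy) and concludes the evader wins iff $\neg \mathbf{P1} \wedge \big(\mathbf{P2}\implies(\mathbf{E1}\vee\mathbf{E2})\big)$. Your explicit hybrid policy with first-detection time $\tau^*$ and the minimality argument in case (c) is just a more structured packaging of the same case analysis, at the same level of rigor as the paper regarding determinacy of the closed-loop game and the information-structure justification for the open-loop commitment.
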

Proposition~\ref{prop:hybrid_game} states that, in order for the evader to avoid capture, it must be that any pursuer trajectory that eventually drives the state to the capture basin previously enters the evader's field of view or violates the pursuer's obstacle constraints.

The following result will serve as a bridge between the theoretical solution of the hybrid game and a safety-preserving onboard decision-making scheme for autonomous driving systems.

\begin{definition}[Forward Hidden Set]\label{def:fhs}
    The pursuer's forward hidden set at time $t$ for a given evader trajectory $\traj_e$ is the set of states that the pursuer could reach at time $t$ without violating its obstacle constraints or being discovered by the evader.
    \begin{equation*}
        \fhs_\obj(t;\hidden_\obj^0,\traj_e) = \fras_\obj\Big(\!t;0,\hidden_\obj^0,
        \!\footprint^{-1}_\obj\!\big(\fov\big(\traj_e(\cdot),\obst(\cdot)\big)\cup \obst_\obj(\cdot)\big)\!\Big) 
    \end{equation*}
\end{definition}

\begin{definition}[Danger Zone]\label{def:danger}
    The danger zone for a given evader state $x_e$ at time $t$ is the set of pursuer states from which the evader cannot safely avoid capture in the closed-loop game.
    \begin{equation*}
        \danger(t;x_e^t) = \big\{x_\obj\mid (x_e^t,x_\obj^t) \in \capturebasin(t)\big\}
    \end{equation*}
\end{definition}

\begin{theorem}[Forward-Backward Reach-Avoid Set Characterization]\label{thm:game_checker}
    The evader starting from state $x_e^0$ can maintain safety and avoid capture by a hidden pursuer starting anywhere in $\hidden_\obj^0$ if and only if there exists $\csig_e$ such that $\forall t\ge 0$, $\footprint_e\big(\traj_e(t;0,x_e^0,\csig_e)\big) \not\in \obst_e(t)$ and, additionally, for all $t\ge 0$
    \begin{equation*}\label{eq:game_checker}
        \fhs_\obj\big(t;\hidden_\obj^0,\traj_e(\cdot;0,x_e^0,\csig_e)\big)
        \cap
        \danger\big(t;\traj_e(t;0,x_e^0,\csig_e)\big)
        =\emptyset
    \end{equation*}
\end{theorem}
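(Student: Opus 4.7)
The plan is to derive Theorem~\ref{thm:game_checker} as a direct geometric reformulation of Proposition~\ref{prop:hybrid_game}, by unpacking the two new set-valued objects into their defining quantified statements and then matching those statements against the implication appearing in~\eqref{eq:hybrid_game}. The evader's self-obstacle constraint $\footprint_e(\traj_e(t)) \cap \obst_e(t) = \emptyset$ is shared verbatim between the two statements, so the whole content of the proof consists in establishing that the disjointness $\fhs_\obj(t;\hidden_\obj^0,\traj_e) \cap \danger(t;\traj_e(t)) = \emptyset$ for every $t \geq 0$ is equivalent to the implication in~\eqref{eq:hybrid_game} holding for all initial pursuer states and control signals.

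First, I would unpack Definition~\ref{def:danger}: $x_\obj \in \danger(t;x_e^t)$ is by definition exactly $(x_e^t, x_\obj) \in \capturebasin(t)$, so danger-zone membership is just capture-basin membership of the joint state. Next, I would unpack Definition~\ref{def:fhs} via Definition~\ref{def:FRAS}: $x_\obj \in \fhs_\obj(t;\hidden_\obj^0,\traj_e)$ iff there exist $x_\obj^0 \in \hidden_\obj^0$ and a measurable control $\csig_\obj$ whose induced trajectory satisfies $\traj_\obj(t;0,x_\obj^0,\csig_\obj) = x_\obj$ together with $\footprint_\obj\bigl(\traj_\obj(\tau;0,x_\obj^0,\csig_\obj)\bigr) \cap \bigl(\fov(\traj_e(\tau),\obst_e(\tau)) \cup \obst_\obj(\tau)\bigr) = \emptyset$ for every $\tau \in [0,t]$.

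Combining these two characterizations, $\fhs_\obj(t;\hidden_\obj^0,\traj_e) \cap \danger(t;\traj_e(t))$ is nonempty at some $t \geq 0$ iff there exist $x_\obj^0 \in \hidden_\obj^0$ and $\csig_\obj$ whose trajectory simultaneously (i) lands in $\capturebasin(t)$ at time $t$ and (ii) keeps the pursuer footprint out of both the evader's FOV and the pursuer's obstacle set throughout $[0,t]$. Negating, universally quantifying over $t \geq 0$, $x_\obj^0 \in \hidden_\obj^0$, and $\csig_\obj$, and rearranging the conditional yields exactly the implication in~\eqref{eq:hybrid_game}. The theorem then follows by directly invoking Proposition~\ref{prop:hybrid_game}.

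The main obstacle I anticipate is a mild bookkeeping discrepancy between the closed interval $[t_0,t]$ used in Definition~\ref{def:FRAS} and the half-open interval $[0,t)$ used inside the implication in~\eqref{eq:hybrid_game}: the forward hidden set forbids detection even at the boundary time $t$, while Proposition~\ref{prop:hybrid_game} only requires prior detection. I would resolve this by appealing to continuity of $\traj_\obj$ and $\footprint_\obj$ along with closedness of $\capturebasin(t)$ (a standard regularity property following from the Hamilton--Jacobi--Isaacs characterization alluded to in Section~\ref{sec: closed-loop}), so that boundary-time detection either coincides with a strictly earlier detection instant or leaves the joint state on $\partial\capturebasin$ and can be handled by a limiting argument. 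With this caveat the equivalence is exact and the theorem follows.
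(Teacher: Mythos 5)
Your proposal is correct and follows essentially the same route as the paper's own proof: unpack Definitions~\ref{def:fhs} and~\ref{def:danger} so that the disjointness condition becomes the contrapositive of the implication in~\eqref{eq:hybrid_game}, then invoke Proposition~\ref{prop:hybrid_game}. Your observation about the mismatch between the closed interval $[t_0,t]$ in Definition~\ref{def:FRAS} and the half-open interval $[0,t)$ in~\eqref{eq:hybrid_game} is a genuine boundary subtlety that the paper's proof passes over silently, and your proposed continuity/closedness resolution is a reasonable way to close it.
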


The above characterization provides a direct safety checking criterion for onboard decision-making.
If we can ensure that at all states in the autonomous vehicle's trajectory plan \emph{the forward hidden set and the danger zone are disjoint}, then the autonomous vehicle will be able to safely respond (through $\policy_e^t$) to maintain safety in the event that a hidden object appears.

\section{Technical Framework}\label{sec:framework}
Our theoretical results concern a two-mode hybrid dynamic game in continuous time. However, in real-world operation, autonomous vehicles receive sensor observations, conduct safety verification, and make decisions at discrete moments in time. Moreover, computing the winning condition for an infinite-horizon pursuit-evasion game is intractable for general cases. This section introduces an algorithmic trajectory planning framework based on the theoretical analysis in Section~\ref{sec:game} to overcome these challenges.

\subsection{Finite Horizon Approximation of the Closed-loop Game} \label{sec: close_loop_algo}

Robots, including autonomous vehicles, utilize receding horizon control \cite{Bellingham2002_receding_horizon} and partial motion planning \cite{petti2005_PMP} to compute partial trajectories efficiently and frequently replan to deal with uncertainty evolution over a long time. However, collision avoidance over a short planning horizon is not able to assure persistent feasibility and safety. Instead, previous works provide infinite horizon safety guarantees by planning partial trajectories within a \textit{safe set}
\cite{mayne2014_MPC_survey, schouwenaars2006_safeset}, or to avoid the \textit{inevitable collision set (ICS)} \cite{bouraine2014_passPMP, petti2005safe, wu2012guaranteed}. 


\begin{assumption}[Invariant Safe Set]\label{assu: invariant_safe_set} 
There exists a (possibly time-varying) \textit{invariant safe set} $\safeset: [0,\infty)\rightrightarrows\reals^{n_e}$ of collision-free states of the ego agent, from which, for any possible actions from other agents, the ego agent has a \textbf{known} strategy $\policy_e^\safeset: [0,\infty)\times \reals^{n_e}\to\cset_e$ that enables it to remain inside this set for an infinite time horizon:
\begin{subequations}\label{eq:safe_set}
\begin{align}
    &\forall x_e^t\in\safeset(t), \;
    \footprint(x_e^t)\cap \obst_e(t) = \emptyset,\quad \forall t\ge 0
    \label{eq:safe_set_noncollision}\\
    &\forall x_e^t\in\safeset(t), \forall x_\obj^t\in\hidden_\obj^t, \;
    (x_e^t,x_\obj^t)\not\in\capturebasin(t)
    \label{eq:safe_set_noncapture}\\
    &x_e^t \in \safeset(t) \implies
    \forall \tau\ge t, \traj_e\big(\tau; t, x_e^t, \policy_e^\safeset(\cdot)\big)\in\safeset(\tau)
    \label{eq:safe set_invariance}
\end{align}
\end{subequations}
\end{assumption}

The control barrier function \cite{ames2019_cbf_survey} is introduced to verify and enforce such set. It can be approximated through optimization \cite{ames2016_cbf}, learned through demonstrations \cite{robey2020_cbf_learning}, and analytically expressed for specific dynamics \cite{pek2018_efficient_safeset}.

To retain an infinite horizon safety guarantee, we conservatively approximate the pursuit-evasion game in Section \ref{sec: closed-loop} as a finite horizon reach-avoid game, whose winning set for the evader is:
\begin{align} \label{eq:capture_basin_reach_avoid}
    \tilde{\capturebasin}^c(t) =
    \{&(x^t_e, x^t_\obj)\mid
    \forall \cstrat_\obj,
    \exists \csig_e,
    \exists\tau\in[t,T], \notag\\
    &\traj_{e}(\tau;t,x^t_e,\csig_e)\in\safeset,~ \wedge~ \forall s\in[t,\tau],\notag\\
    &\footprint_e\big(\traj_{e}(s;t,x^t_e,\csig_e)\big)\cap \footprint_\obj\big(\traj_\obj(s;t,x_\obj^t, \cstrat_\obj[\csig_e])\big) = \emptyset \notag\\
    &\wedge \footprint_e\big(\traj_e(s;t,x_e^t, \csig_e)\big)\cap\obst_e(s)=\emptyset \}
\end{align}
The ego agent must be able to reach a state in the invariant safe set before $T$ while avoiding capture by the pursuer and collisions with other objects $\obst_e$. After reaching a state in $\safeset$, the ego agent can apply a known feedback policy to achieve an infinite-horizon safety guarantee.
It can be seen by inspection of~\eqref{eq:capture_basin},~\eqref{eq:safe_set},~\eqref{eq:capture_basin_reach_avoid} that the
winning condition for the ego agent is an under-approximation of the complement of the original \textit{capture basin} defined in \eqref{eq:capture_basin},
i.e.~$\tilde{\capturebasin}^c(t)\subseteq\capturebasin^c(t)$.

Therefore, we are able to over-approximate the danger zone in Definition \ref{def:danger} using finite-time reach-avoid game, and maintain safety over infinite horizon. In general, the winning condition for the reach-avoid problem can be obtained through level-set methods \cite{mitchell2008flexible}, using Hamilton-Jacobi (HJ) reachability analysis \cite{Fisac2015ReachAvoidGame}, and applying sum-of-square optimization \cite{landry2018reach}.

\subsection{Open-loop Game with Discrete Sensor Observation} \label{sec: open_loop_algo}
Consider the ego agent receives new real-time sensor observation every $\Delta t$ seconds. At ${t_n} = n\Delta t$, the ego agent tries to determine whether it can safely move towards $x^{t_{n+1}}_e$. As shown in Theorem \ref{thm:game_checker}, the safety criterion is checking whether the forward hidden set $\hidden_\obj(t_{n+1}; \hidden_\obj^0, \traj_e)$ and the danger zone $\danger(x^{t_{n+1}}_e)$ are disjoint. As we do not have the access to future sensor observation, the forward hidden set can be over-approximated as \eqref{eq: forward_hidden_over} by the forward-reach avoid set of hidden set that avoiding being observed up to $t_n$.
\begin{multline}\label{eq: forward_hidden_over}
                \tilde{\hidden}_\obj(t_{n+1};\hidden_\obj^0,\traj_e) = \fras_\obj\Big(t_{n+1};0,\hidden_\obj^0,\\
                        \footprint^{-1}_\obj\big(\fov\big(\traj_e(0:t_n;\cdot),\obst(0:t_n;\cdot)\big)\cup \obst_\obj(\cdot)\big)\Big) 
    \end{multline}

In order to achieve the long term goal and provides an efficient motion strategy that minimizes any given cost function, the open-loop game in our framework plans trajectories in a receding horizon fashion. Without loss of generality, we consider the ego agent at state $x^{t_0}_e$ plans an open-loop trajectory with $N\Delta t$ seconds time-horizon. At each intermediate step, the planner tries to determine if a proposed sub-trajectory, which connects the waypoints $\tilde{x}^{t_{n-1}}$ to $\tilde{x}^{t_n}$, can be expanded to $\tilde{x}^{t_{n+1}}$. By assuming no new agent will emerge from the occlusion at $t_n$, it first predicts the states of all observed dynamic agents in the environment, and then estimates the field of view to predict the forward hidden set $\tilde{\hidden}_\obj(t_{n+1}; \hidden_\obj^0,\traj_e)$. If proposed waypoint $\tilde{x}^{t_{n+1}}$ satisfies the safety constraint defined in Theorem \ref{thm:game_checker}, the sub-trajectory is successfully expanded.

\zznote{}
With a long planning horizon, the open-loop game inexplicitly encourages the ego agent to choose safe trajectories that may lead to information gain and potentially open up more free space for planning by predicting the future information state optimistically. This allows the ego agent to plan trajectories, which  may originally be rejected by previous methods that avoid the entire FRS. On the other hand, the constraint introduced in Theorem \ref{thm:game_checker} also guarantees the existence of evasive trajectories when the actual sensor observation disagrees with the prediction or a new object emerged from the occlusion. Therefore, the proposed two-mode hybrid game framework provides sufficient conditions for safe trajectory planning in a partially occluded environment and can generate less conservative behaviors than previous methods.                                                               

\subsection{Framework of the Two-Mode Hybrid Game}
This subsection summarizes the proposed two-mode hybrid game in the Algorithm \ref{algo: twostage} for its application in trajectory planning. Our framework consists following functions:
\begin{itemize}
    \item \textbf{\texttt{Sense}}: detects obstacles, conducts state estimations and identify the field of view at a fixed rate.
    \item \textbf{\texttt{UpdateHidden}}: updates the forward hidden set described in Definiation \ref{def:fhs}.
    \item \textbf{\texttt{PlayOpenLoop}}: generates a receding-horizon open-loop trajectory described in Section \ref{sec: open_loop_algo}. 
     \item \textbf{\texttt{PlayClosedLoop}}: plans the evasive action to win the reach-avoid game introduced in the Section \ref{sec: close_loop_algo}.
     \item \textbf{\texttt{PlanSafe}}: checks the safety criterion in Theorem \ref{thm:game_checker}.
     \item \textbf{\texttt{InvSafeState}}: determines whether the ego agent wins the closed loop game by reaching $\safeset$.
\end{itemize}

Our framework operates under a finite state machine with two modes - \texttt{OpenLoop} and \texttt{ClosedLoop}. By assuming the vehicle starts inside ${\safeset}$, the system is initialized in the \texttt{OpenLoop} mode and frequently replan to improve efficiency. At each time step in the \texttt{OpenLoop} mode, the planned waypoint is verified based on the current observation using Theorem \ref{thm:game_checker}. If the safety verification failed, the system immediately switches to the \texttt{ClosedLoop} mode and plays the reach-avoid to avoid a collision and shield $\safeset$. Once the ego agent reaches a safe state, the system switches back to the \texttt{OpenLoop} mode and replan for future actions. 
\begin{algorithm}[ht]
	\caption{Occlusion-aware Trajectory Planning}
	\label{algo: twostage}
	\begin{algorithmic}[1]
		\renewcommand{\algorithmicrequire}{\textbf{Input:}}
		\renewcommand{\algorithmicensure}{\textbf{Output:}}
		\State \texttt{Mode} $\leftarrow$ \texttt{OpenLoop}
		\State $t \leftarrow 0$
		\State $t_{\text{replan}}\leftarrow 0$
		\State $\mathcal{I} \leftarrow \mathcal{C}_\obj$
		\While {running at the timestep $t$}
		\State $x_e, \fov, \obst_e \leftarrow$ \texttt{\textbf{Sense}()}
		\State $\hidden_\obj \leftarrow$ \texttt{\textbf{UpdateHidden}($\hidden,x_e,\fov$)}
		\If {\texttt{Mode} is \texttt{OpenLoop}} 
			\If {$t\geq t_{\text{replan}}$}
			    \State $\mathbf{x}\leftarrow$  \texttt{\textbf{PlayOpenLoop}($x,\obst_e,\hidden_\obj$)} 
				\State $t_{\text{replan}}\leftarrow t+T_{\text{replan}}$
			\EndIf
			\State $x_\text{plan}\leftarrow \mathbf{x}[t+1]$
			\If{not \texttt{\textbf{PlanSafe}(}$x, x_\text{plan}, \obst_e,\hidden_\obj$\texttt{)}}
			    \State $x_\text{plan}\leftarrow$ \texttt{\textbf{PlayClosedLoop}($x,\obst_e,\hidden_\obj$)}
			    \State \texttt{Mode} $\leftarrow$ \texttt{ClosedLoop}
			\EndIf
		\Else
		    \State $x_\text{plan}\leftarrow$ \texttt{\textbf{PlayClosedLoop}($x,\obst_e,\hidden_\obj$)}
		    \If{\texttt{\textbf{InvSafeState(}}$x_\text{plan}, \obst_e,\hidden_\obj$\texttt{)}}
			    \State \texttt{Mode} $\leftarrow$ \texttt{OpenLoop}
			    \State $t_{\text{replan}}\leftarrow t+1$
			\EndIf
		\EndIf
		\State \texttt{\textbf{DriveTo}(}$ x_\text{plan}$\texttt{)}
		\State $t \leftarrow t+1$
		\EndWhile
	\end{algorithmic} 
\end{algorithm}

\section{Simulation Results and Evaluation}\label{sec:results}
This section analyzes the benefit of modeling occlusion-aware trajectory planning problems as a two-mode hybrid game by comparing our approach with the previous work that over-approximates the hidden agent's occupancy through forward-reachability analysis. We describe our implementation details regarding our prototype planner and the resultant behaviors in three typical driving scenarios involving occlusion.  

\subsection{Dynamics and Trajectory Planner}
Our experiment uses the road centerline as the reference path and represents all agents' states in the Frenet frame, where $p$ is the agent's progress, and  $l$ is the lateral position relative to the road centerline. Their longitudinal and lateral velocities are denoted as $v_{lon}$ and $v_{lat}$. Assuming the agent's heading angle with respect to the reference path is small, we can represent the system's dynamics with a double integrator shown in the \eqref{eq: double_integrator}.
\begin{subequations}\label{eq:traj_experiment}
\begin{align}
\dot{p}(t) = v_{lon}(t),~~
\dot{v}_{lon}(t) = a_{lon}(t)\\
\dot{l}(t) = v_{lat}(t),~~~
\dot{v}_{lat}(t) = a_{lat}(t)
\end{align}\label{eq: double_integrator}
\end{subequations}
The autonomous vehicle is assumed to have direct control over its longitudinal and lateral accelerations denoted by $a_{lon}$ and $a_{lat}$. The accelerations of ego and hidden agents are subjected to constraints listed in the \eqref{eq: accel_constraint}.
\begin{equation}
    a_{lon}\in[-8,4]~\si{\meter\per\square\second} , ~~
    \sqrt{a_{lon}^2+a_{lat}^2}\leq 8~\si{\meter\per\square\second}
\label{eq: accel_constraint}
\end{equation}
We construct a spatio-temporal lattice~\cite{mcnaughton2011_lattice, Ziegler2009Lattice} over a pre-defined route with $0.25~\si{\meter}$ spatial resolution in both longitudinal and lateral dimension. The open-loop game planner uses an A$^*$ search in a receding horizon fashion and generates a trajectory satisfying winning conditions stated in Theorem~\ref{thm:game_checker} and maximizing the progress at the end of the time horizon. 

At each step, we conduct a ray-casting computation to measure the current field of view and use ground truth data from CARLA for observed objects' state. During the planning of the open-loop game, we predict the field of view from a pre-constructed Octomap~\cite{hornung13_octomap} and estimate the occlusion due to dynamic obstacles from their predicted occupancy. Following the method proposed in \cite{neel2020improving}, the forward hidden set $\fhs^t_\obj$ is approximated by the velocity upper bound of hidden agents based on the occluded region's length and observation history.
\zznote{} Although obtaining the solution for the closed-loop reach-avoid game is computationally challenging in general case, our prototype planner takes the advantage of the simplified dynamics and approximates the closed-form solution through a finite set of motion primitives, as detailed in Appendix~\ref{app: reach-avoid-approx}.

While our prototype planner utilizes an Octomap to predict future visibility, a precise representation of the environment is not required for safety guarantees. Our framework always verifies the safety of the next step in a planned trajectory before executing it, by checking the Theorem~\ref{thm:game_checker} condition for the current sensor observation and updated hidden set. In the event that the verification is negative, a closed-loop strategy still exists to avoid the collision under our safety criterion. However, accurate prediction of future visibility can improve planning efficiency and avoid frequent evasive behaviors.

\subsection{Invariant Safe Set}
In our experiments, we assume the ego agent is safe when its entire footprint occupies a single lane with correct traffic direction, and the vehicle is either stopped or keeps a safe distance with agents ahead. The safe distance can be obtained by determining the minimum required distance to avoid collisions, when both the ego agent and other traffic participants apply the maximum deceleration for emergency brake. However, stopping in opposite lanes or in the intersection is not considered safe.

\subsection{Scenario 1: Blind Summit}
\begin{figure}[t!]
    \centering
    \includegraphics[width=3.5in]{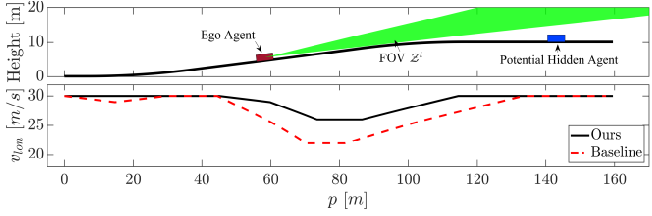}
    \captionsetup{belowskip=-10pt}
    \caption{The road geometry and ego vehicle's progress versus its longitudinal velocity in the blind summit example}
    \label{fig:blind_summit_v_profile}
\end{figure}
In the first experiment shown in Fig~\ref{fig:blind_summit_v_profile}, we consider the ego vehicle approaches an uphill slope in the CARLA map \texttt{Town05}. The initial velocity of the ego agent is $30~\si{\meter\per\second}$. The hill has a $7.2^\circ$ slope with $10~\si{\meter}$ vertical difference. The planning horizon of the open-loop game is $10$ steps, $0.5~\si{\second}$ apart. The open-loop trajectory is replanned every time step. 
By considering the worst-case scenario of a potential obstacle stopped in the occluded region, the closed-loop game reduces to determining whether the ego agent can stop before colliding with the hidden obstacle. As shown in Fig.~\ref{fig:blind_summit_v_profile}, our proposed method, which considers the vehicle's future sensing and decision-making ability, allows the ego agent to plan a trajectory with higher velocity and efficiency when compared with the baseline method.

\subsection{Scenario 2: Intersection With Partial Occlusion}
In the second experiment as shown in Fig.~\ref{fig:intersection_scene}, the ego agent drives in an urban environment of CARLA map \texttt{Town01} and approaches a T intersection 60 m ahead, where a part of oncoming traffic is occluded by parked trucks. We set the speed limits as 15 $\si{\meter\per\second}$ for all agents in the scene. The planning horizon of the open-loop game is $6$ steps, 0.5 $\si{\second}$ apart. The open-loop game is replanned every time step to improve planning efficiency.
\begin{figure}[ht]
    \centering
    \includegraphics[width=3.5in]{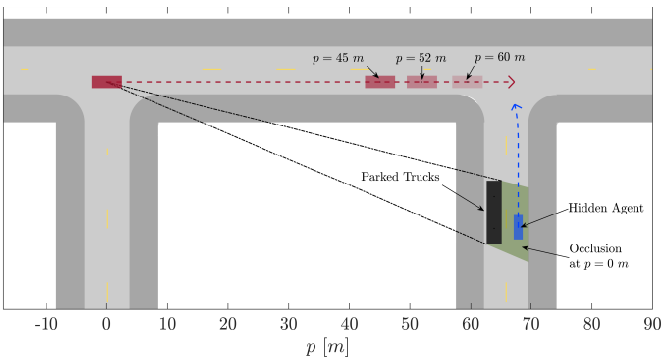}
    \captionsetup{belowskip=-10pt}
    \caption{The ego agent approaches an open T intersection, with the oncoming lane partially occluded by parked trucks.}
    \label{fig:intersection_scene}
\end{figure}

\begin{figure}[ht]
    \centering
    \includegraphics[width=3.5in]{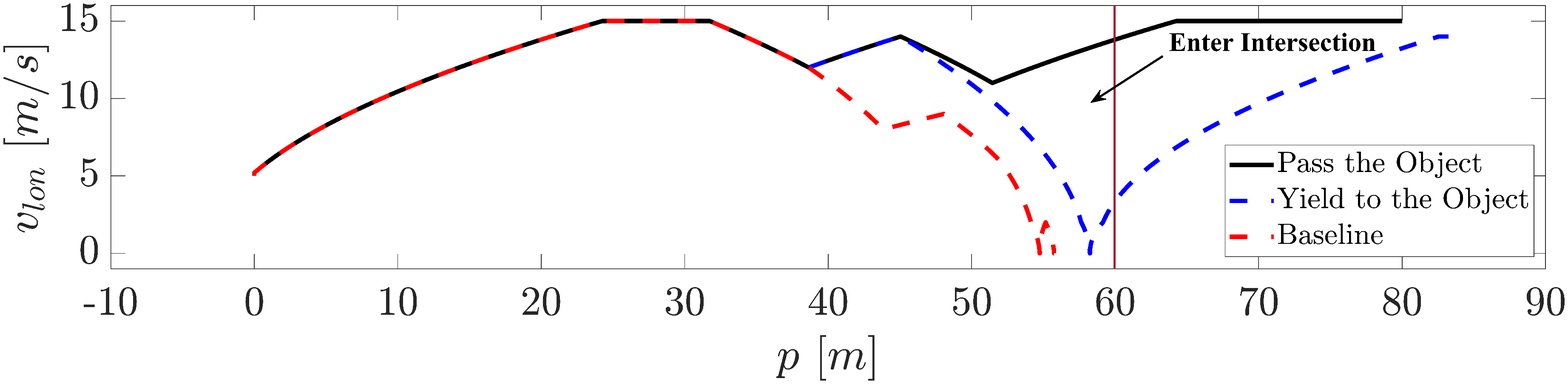}
    \captionsetup{belowskip=-4pt}
    \caption{Ego agent's progress versus its longitudinal velocity in the blind T intersection example}
    \label{fig:intersection_v_compare}
\end{figure}

\begin{figure*}[ht]
	\centering
	\begin{subfigure}[b]{0.3\textwidth}
		\centering
		\includegraphics[height=1.8in]{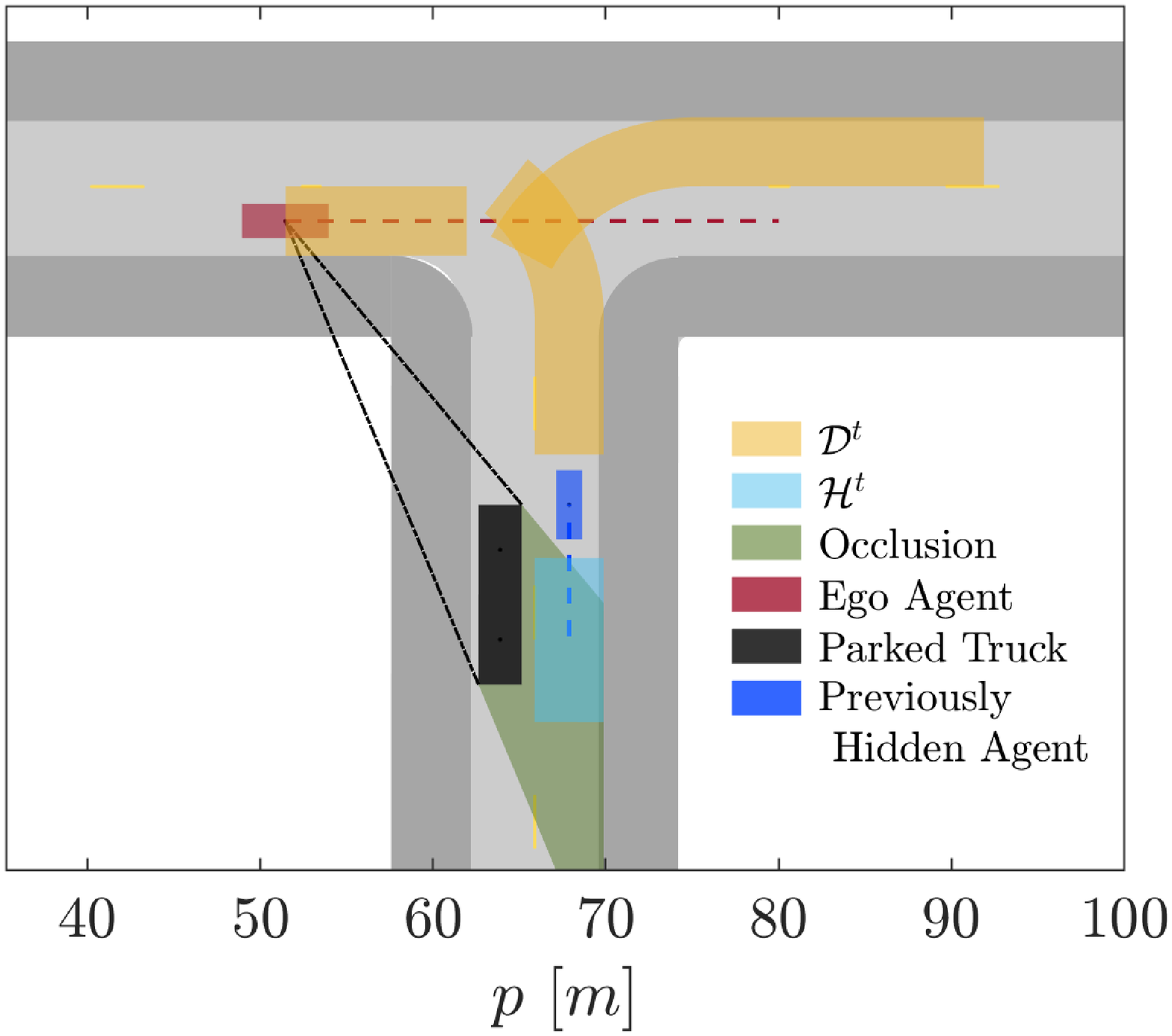}
		\caption{$p=52~\si{\meter}$ in the first scenario}
		\label{fig: intersection_pass_52}
	\end{subfigure}	
\hfill
	\begin{subfigure}[b]{0.3\textwidth}
		\centering
		\includegraphics[height=1.8in]{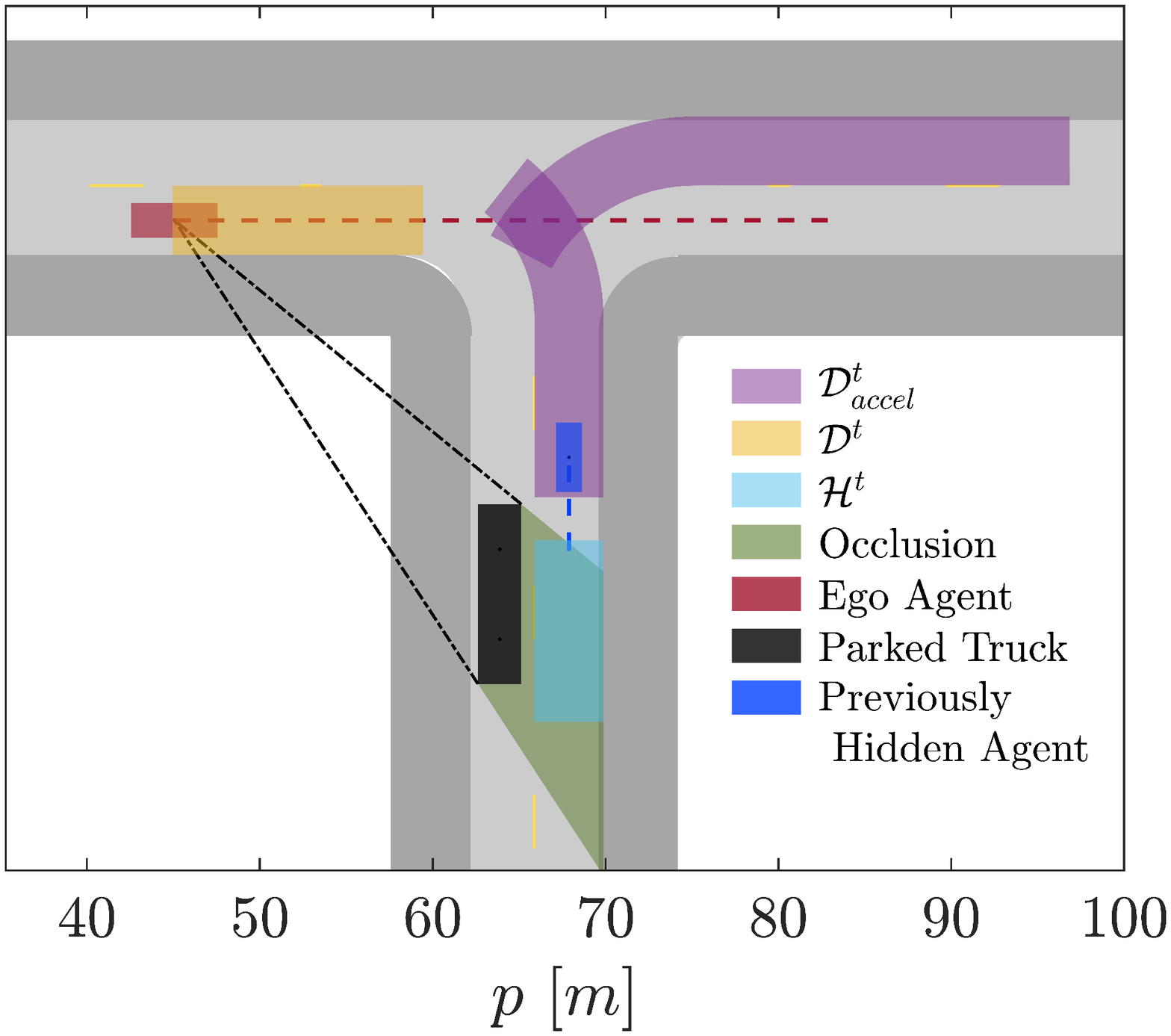}
		\caption{$p=45~\si{\meter}$ in the second scenario}
		\label{fig: intersection_yield_45}
	\end{subfigure}	
\hfill 
	\begin{subfigure}[b]{0.3\textwidth}
		\centering
		\includegraphics[height=1.8in]{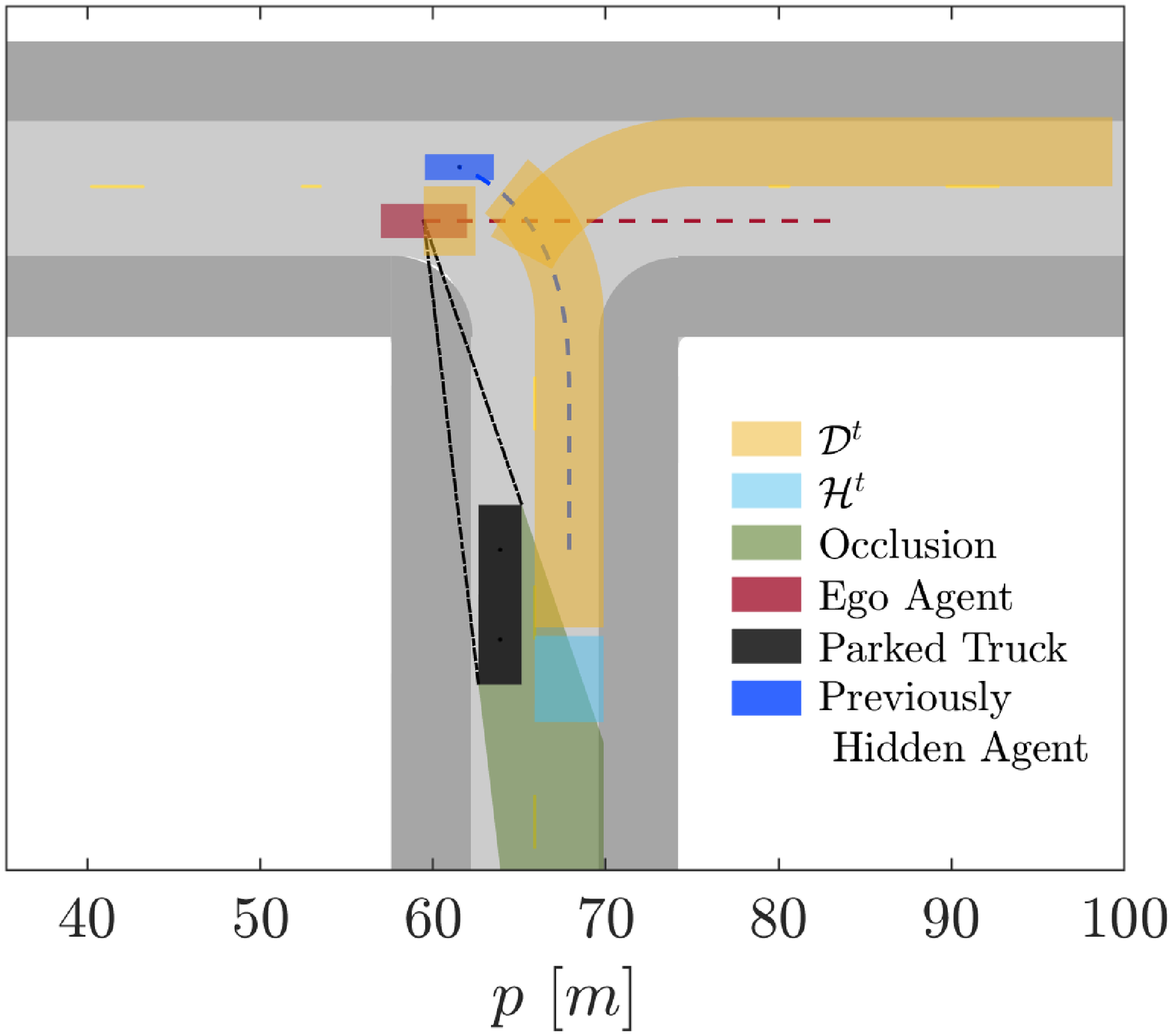}
		\caption{$p=60~\si{\meter}$ in the second scenario}
		\label{fig: intersection_yield_60}
	\end{subfigure}	
\captionsetup{belowskip=-10pt}
\caption{Three distinctive interaction scenarios in our intersection example. a) the ego agent approaches the intersection with the pursuer outside its danger zone. b) the ego agent approaches the intersection and starts slowing down as the pursuer enters the intersection. c) the ego agent continues to proceed after the pursuer exited the intersection}
\label{fig: intersection_scenario}		
\end{figure*}

Assuming that the ego vehicle remains in its lane, the closed-loop game solution in this example is analytically obtainable, with the optimal evasive policy always consisting in one of two ``bang-bang'' actions. The vehicle can apply the maximum deceleration to either enter the intersection as late as possible or stop before the junction. Alternatively, the vehicle can apply the maximum acceleration to pass the intersection as early as possible.

We test two intersection examples with hidden objects entering the field of view at different times. The longitudinal velocity profile of the proposed algorithm under each scenario and the velocity profile of the baseline method are shown in Fig.~\ref{fig:intersection_v_compare}. In the first example, the ego agent initially slows down and then maintains its speed as it approaches the intersection. At $p=52~\si{\meter}$, as shown by the same-lane danger zone in Fig.~\ref{fig: intersection_pass_52}, the ego vehicle would be able to stop just before entering the junction. However, as the danger zone does not intersect with the recently detected agent, the ego vehicle can safely traverse the intersection by applying full acceleration.

In the second example, the hidden agent is set to enter the field of view earlier than the previous case. As shown in Fig.~\ref{fig: intersection_yield_45}, the ego agent detects the previously hidden agent at $p=45~\si{\meter}$ and the safety condition indicates the (would-be) pursuer will enter the danger zone if the ego agent attempts to accelerate. Therefore, the ego agent takes an evasive trajectory and stops before the junction. Once the agent is about to exit the intersection, the ego vehicle slowly begins to enter it, and then proceeds by acceleration when the traffic is clear. The planner benefits from our two-mode game framework since the analysis predicts that the ego vehicle will gain sufficient visibility to safely pass the intersection. As shown in Fig.~\ref{fig: intersection_yield_60}, at  $p=60~\si{\meter}$  when the ego vehicle enters the intersection from static, the danger zone is barely disjoint with the hidden set. In the baseline result, the ego vehicle always freezes before the intersection due to the rapidly growing forward-reachable set of hidden states over the planning horizon (uninhibited by future measurements).


\subsection{Scenario 3: Truck Overtaking}

\begin{figure}[ht]
    \centering
    \includegraphics[width=3.5in]{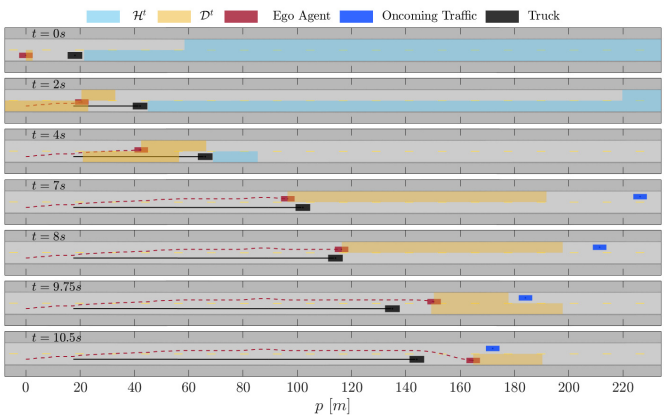}
    \captionsetup{belowskip=4pt}
    \caption{The ego agent safely overtakes the truck, merges back to the driving lane, and avoids collisions with oncoming agent }
    \label{fig:overtake_finish}
\end{figure}

\begin{figure}[ht]
    \centering
    \includegraphics[width=3.5in]{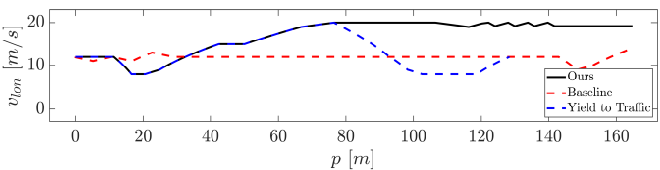}
    \captionsetup{belowskip=-8pt}
    \caption{Ego agent's progress versus its longitudinal velocity during the overtaking attempts}
    \label{fig:overtake_vel}
\end{figure}
\begin{figure}[ht]
    \centering
    \includegraphics[width=3.5in]{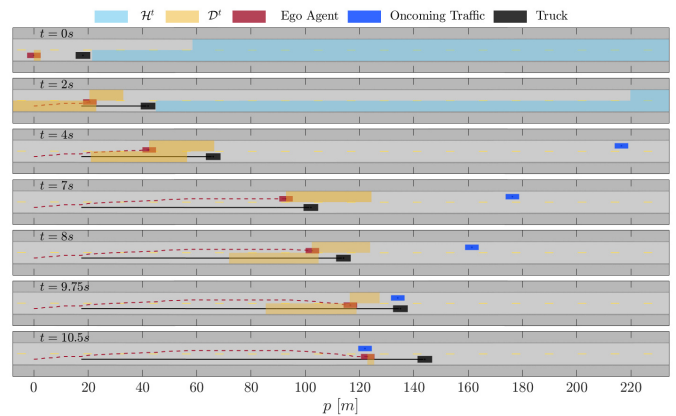}
    \captionsetup{belowskip=-10pt}
        \caption{The ego agent attempts to overtake the truck and safely aborted the overtaking after detecting the oncoming agent}
    \label{fig:overtake_yield}
\end{figure}

 Finally, we set up the overtaking example in a long undivided highway of CARLA map \texttt{Town01} as shown in Fig.~\ref{fig:overtake_finish}. The ego agent is initially $10 ~\si{\meter}$ behind the truck, and both vehicles start with the velocity of $12~\si{\meter\per\second}$. The speed limit in both traffic directions is $20~\si{\meter\per\second}$. The open-loop game's planning horizon is $15$ time steps, $0.5~\si{\second}$ apart. Trajectories are replanned every time step to improve planning efficiency.

We formulate the closed-loop strategy by first finding two shortest evasive trajectories reaching invariant safe states behind and ahead of the truck.
Since our vehicle dynamics are represented as a double integrator, both trajectories and danger zone can be analytically expressed. 

In Fig.~\ref{fig:overtake_finish} and~\ref{fig:overtake_vel}, we show the ego agent's overtaking trajectory and corresponding velocity profile using the proposed algorithm. Initially, the autonomous vehicle slows down to increase the distance behind the truck and gain visibility of oncoming traffic. After the lane change, the vehicle briefly accelerates and then maintains a constant velocity to observe the driving lane ahead of the truck. By doing so, the ego agent maintains the option to return to a safe state behind the truck, in the event of an oncoming vehicle being observed later. Once it determines that it is safe to merge in the front of the truck, the ego agent accelerates immediately, completes the overtaking, and then returns back to its original lane. During this process, even though an oncoming agent enters the field of view, the ego agent still successfully finishes the overtaking since the agent never enters the danger zone. For comparison, we test the baseline algorithm in the same scenario: velocity profiles are shown in Fig.~\ref{fig:overtake_vel}. The vehicle using the baseline method-keeps a constant velocity behind the truck, because the forward reachable set of hypothetical oncoming traffic limits its ability to change lanes. 

In another overtaking example shown in Fig.~\ref{fig:overtake_yield}, we set the oncoming traffic to enter the ego agent's field of view earlier than the previous scenario. As shown in the velocity profile of Fig.~\ref{fig:overtake_vel}, the ego agent follows the same strategy initially and moves towards the opposite lane. After the oncoming traffic is detected, the ego agent determines it is unsafe to overtake the truck. It slows down immediately, merges back into the driving lane, and avoids colliding with the oncoming vehicle.

\section{Conclusion and Future Work}\label{sec:conclusion}
In this paper, we introduce a novel game-theoretic analysis for autonomous vehicles' motion planning under densely-occluded environments, and derive a decision-making framework that is compatible with a wide class of motion planning algorithms and provides worst-case safety guarantees. We demonstrate our approach in the CARLA simulator.

This work opens the door to multiple promising research directions. In real driving settings, the autonomous vehicle often encounters situations that require interaction and coordination with multiple agents to navigate efficiently and safely. Although obtaining a closed-loop solution for dynamic games involving multiple agents is computationally intractable in real time, the proposed framework may become a building block to efficiently approximate solutions for such multi-sided encounters by combining open- and closed-loop strategies from multiple sub-games.

While our implementation assumes perfect perception and state estimation, the game-theoretic analysis can be extended to scenarios in which sensing limitations are induced not only by obstacles but also by the physical properties and eventual failures of the sensor itself. When incorporating a more sophisticated sensor model, our hybrid game setup may help prevent unsafe behavior due to perception uncertainty and sensor malfunctions.
 
Finally, while our game-theoretic analysis computes safety measures under worst-case scenarios, probabilistic modeling of hidden agents can be built upon the current framework to generate more efficient vehicle motion, by considering the observed traffic density and other prior or acquired knowledge about traffic participants' behavior.


\bibliographystyle{unsrtnat}
\bibliography{references}

\clearpage
\appendix
\subsection{Proofs of Theoretical Results}\label{app: proofs}
\begin{proof}[Proof of Proposition~\ref{prop:hybrid_game}]
    The proof of this proposition is constructive, by enumerating the winning and losing conditions for both players.
    The evader's winning conditions in the hybrid game are those from which the it can indefinitely avoid capture by the pursuer while remaining collision-free. Since the evader will not obtain new information about the pursuer's state until detection, it must (at least initially) commit to an open-loop strategy $\csig_e$. Because we are concerned with worst-case analysis, the pursuer can choose its initial state $x_\obj^0$ and open-loop strategy $\csig_\obj$ \emph{after} the evader has declared $\csig_e$.

    \textbf{P1: evader collision.} An evader control signal $\csig_e$ can only be a winning strategy if the resulting trajectory avoids collisions with environment obstacles $\obst_e(t)$ for all future time. Failure of this condition at any future time means that the evader cannot maintain safety and therefore loses the game.
    
    \textbf{P2: capture basin reached.} If at some time $t\ge 0$ the state enters $\capturebasin(t)$, however, then by definition the pursuer can subsequently achieve capture in the closed-loop game; this further implies that the pursuer can achieve capture \emph{regardless} of whether or not the evader eventually detects it. Indeed, at time $t$, the pursuer has an available non-anticipative strategy $\cstrat_\obj$ for which capture ensues for \emph{all} possible evader control signals, including the currently declared $\csig_e$. (Conversely if the joint open-loop trajectory never enters the capture basin of the closed-loop game, then the evader's open-loop strategy is guaranteed to avoid capture regardless of whether or not the pursuer is detected.)
    
    The only way for the evader to win the game if the joint open-loop trajectory enters the capture basin at time $t$ is for its winning conditions to take place at some $\tau\in[0,t)$.
    
    \textbf{E1: pursuer collision.}
    If the pursuer has violated its obstacle constraints $\obst_\obj(\tau)$, it has lost the game and its subsequent open-loop trajectory is irrelevant (and physically meaningless).
    
    \textbf{E2: pursuer detected.}
    If the pursuer has been detected by entering the evader's field of view while the joint state is outside of the capture basin $\capturebasin(\tau)$, the remainder of the open-loop trajectory is not realized, and the game instead transitions to the closed-loop mode, in which the evader is guaranteed to avoid capture and remain collision-free by applying a feedback strategy.
    
    Since the above classification of game executions is complete (every possible game outcome belongs to one of the above categories), we have that the evader wins the game if, and only if, $\neg \mathbf{P1} \wedge \big(\mathbf{P2}\implies(\mathbf{E1} \vee \mathbf{E2})\big)$. This concludes the proof.
\end{proof}

\begin{proof}[Proof of Theorem~\ref{thm:game_checker}]
    We need to show that, for any given $\csig_e$, characterization~\eqref{eq:game_checker} imposed for all $t\ge 0$ is equivalent to condition~\eqref{eq:hybrid_game}.
    Indeed, let us require that~\eqref{eq:game_checker} holds $\forall t\ge 0$, that is, the forward hidden set $\fhs_\obj(t;\hidden_\obj^0,\traj_e)$ and danger zone $\danger(t;x_e^t)$ are disjoint for all $t$.
    By Definition~\ref{def:fhs}, this condition is equivalent to requiring that any states that the pursuer can reach at time $t$ without first being detected or violating the obstacle constraints must be outside of the danger zone, that is, by Definition~\ref{def:danger}, the joint state $(x_e^t,x_\obj^t)\not\in\capturebasin(t)$.
    This requires that there can be no undetected, collision-free trajectory of the pursuer for which the joint state eventually enters the closed-loop capture basin, which is exactly the contrapositive of~\eqref{eq:hybrid_game}.
    This concludes the proof.
\end{proof}

\subsection{Approximate reach-avoid solution} \label{app: reach-avoid-approx}
To further simplify the closed-loop game, we can under-approximate the winning condition of the reach-avoid problem defined in \eqref{eq:capture_basin_reach_avoid} by searching among a finite set of open-loop strategies $\mathbf{U}=\{\csig_1, \csig_2,\hdots\}$.  

We first define a set of reaching trajectory $\tilde{\traj}_i$, which drives the ego agent to a state in the target invariant safe set $\safeset$ before $T$ and avoid colliding with other objects using one of the open-loop strategy $\csig_i$.
\begin{align}
    \mathcal{X}_e(x^t_e, T,\mathbf{U}) =\big\{&\tilde{\traj}_i(\cdot;t, x^t_e, \csig_i) ~|~ \exists\csig_i\in\mathbf{U},~\exists\tau\in[t,T], \notag\\
    &\tilde{\traj}_i(\tau; t, x_e^{t}, \csig_i)\in \safeset \wedge \forall s\in[t,\tau],\notag\\ 
    &\footprint_e(\tilde{\traj}_i(\tau; t, x_e^{t}, \csig_e))\cap\obst_e(s) = \emptyset\big\}
\end{align}

Assume the ego agent is committing to one of the reaching trajectories $\tilde{\traj}_i$, the corresponding trajectory-specific danger zone $\bar{\danger}(\tilde{\traj}_i)$ is defined as the set of current pursuer states from which the pursuer can achieve capture against this ego trajectory within the time horizon $T$.
\begin{align}
    \bar{\danger}(\tilde{\traj}_i) = \{&x_\obj^t|\exists \cstrat_\obj,\exists \tau\in[t,T],\notag\\
    &\footprint_e( \tilde{\traj}_i(\tau; t, x_e^{t}, \csig_e))\cap\footprint_\obj( \tilde{\traj}_\obj(\tau; t, x_\obj^{t}, \cstrat_e))\neq \emptyset
\end{align}

Therefore, we can over-approximate the danger zone $\danger(x_e^t)$ of this reach-avoid game as
\begin{equation}
    \danger(x_e^t) = \bigcap_{\tilde{\traj}_i\in\mathcal{X}_e(x^t_e, T,\mathbf{U})}\bar{\danger}(\tilde{\traj}_i).
    \label{eq: danger_zone_reach_avoid}
\end{equation}

\end{document}